\documentclass[twoside]{article}

\usepackage[accepted]{aistats2021}
%
%


\setlength{\pdfpageheight}{11in}
\setlength{\pdfpagewidth}{8.5in}

\usepackage[round]{natbib}

\usepackage{amssymb,amsmath,xcolor}

\bibliographystyle{apalike}


\usepackage{amssymb,amsmath,natbib,todonotes}
\usepackage{algorithmic,algorithm}
\usepackage{hyperref}

\newcommand{\BEAS}{\begin{eqnarray*}}
\newcommand{\EEAS}{\end{eqnarray*}}
\newcommand{\BEA}{\begin{eqnarray}}
\newcommand{\EEA}{\end{eqnarray}}
\newcommand{\BEQ}{\begin{equation}}
\newcommand{\EEQ}{\end{equation}}
\newcommand{\BIT}{\begin{itemize}}
\newcommand{\EIT}{\end{itemize}}
\newcommand{\BNUM}{\begin{enumerate}}
\newcommand{\ENUM}{\end{enumerate}}

\newcommand{\BA}{\begin{array}}
\newcommand{\EA}{\end{array}}


\newcommand{\eg}{e.g., }
\newcommand{\ie}{i.e.}

\newcommand{\reals}{{\mathbb R}}



\newcommand{\Tr}{\mathop{\bf Tr}}

\newcommand{\idm}{\mathbf{I}}
\newcommand{\dom}{\mathop{\bf dom}}

\newcommand{\la}{\langle}
\newcommand{\ra}{\rangle}

\newtheorem{theorem}{Theorem}[section]
\newtheorem{proposition}[theorem]{Proposition}
\newtheorem{definition}[theorem]{Definition}
\newtheorem{lemma}[theorem]{Lemma}

\newenvironment{proof}{\textbf{Proof.}}{\hfill$\Box$\bigskip}

\newcommand{\PA}{\mathrm{PA}} 
\newcommand{\normF}[1]{\|#1\|} 
\newcommand{\normone}[1]{\|#1\|_1} 
\DeclareMathOperator*{\minim}{\mathrm{min.}} 
\DeclareMathOperator*{\argmin}{\mathop{argmin}}

\newcommand{\Expect}{\mathbb{E}}

\newcommand{\SHD}{\mathrm{SHD}}
\newcommand{\FDR}{\mathrm{FDR}}

\begin{document}

%
\runningtitle{A Bregman Method for Structure Learning on Sparse Directed Acyclic Graphs}

%

\twocolumn[

\aistatstitle{A Bregman Method for Structure Learning on\\ Sparse Directed Acyclic Graphs}

\aistatsauthor{Manon Romain \And Alexandre d'Aspremont}

\aistatsaddress{CNRS, ENS Paris \& Inria \And CNRS \& ENS Paris} ]

\begin{abstract}
We develop a Bregman proximal gradient method for structure learning on linear structural causal models. While the problem is non-convex, has high curvature and is in fact NP-hard, Bregman gradient methods allow us to neutralize at least part of the impact of curvature by measuring smoothness against a highly nonlinear kernel. This allows the method to make longer steps and significantly improves convergence. Each iteration requires solving a Bregman proximal step which is convex and efficiently solvable for our particular choice of kernel. We test our method on various synthetic and real data sets.
\end{abstract}

\section{INTRODUCTION}

Estimating directed acyclic graphs (DAGs) from observational data is a problem of rising importance in machine learning, with applications in biology \citep{Sach2005}, genomics \citep{Hu2018}, economics \citep{Imbe2019}, time-series analysis \citep{mali2018} and causal inference \citep{Pear2009,Pete2017}. More precisely, given $n$ random variables $X_1\ldots, X_n$, we seek to learn the structure of a Structural Causal Model (SCM), written
\BEQ
  X_j = f_j(X_{\PA(j)}, \epsilon_j),
\nonumber \EEQ
where $\epsilon_j$ is random noise and $\PA(j)$ are the parents of $X_j$, the subset of $\{X_1, \dots, X_n\}$ on which $X_j$ depends. This set of dependencies naturally forms a graph where $\{X_j\}_{j=1\ldots,n}$ are nodes, and edges are directed from nodes in $\PA(j)$ to $X_j$. In this context, we assume the graph $G$ is acyclic, see \eg  \citep[Def.\,6.2]{Pete2017}. 

Equivalently, we could describe our problem in the framework of Bayesian networks, where we search for a compact factorization of the joint distribution. Using the chain rule, we write,
\BEQ
  P(X_1\ldots, X_n) = \prod_{j=1}^n P(X_j\,|\,X_{\PA(j)}),
\nonumber \EEQ
where $\PA(j)$ are the parents of $X_j$: a subset of $\{X_1, \dots, X_{j-1}\}$ on which $X_j$ depends. Unlike with SCMs, the underlying graph $G$ is here naturally acyclic. Structure learning then consists in learning an adequate permutation of the order of variables and the sets $\{\PA(j)\}_{j=1\ldots,n}$. The combinatorial nature of this task makes it NP-hard \citep{Chic2004}. In cases where the graph is identifiable, structure learning of the Bayesian network is equivalent to learning actual causal relationships from observational data, we refer the reader to \citep{Pear2009,Pete2017} for a more complete discussion. 

Here, we focus on the well-studied (and simpler) case of {\em linear structural causal models} of the form,
\BEQ\label{eq:linear_sem}
  X_j =\sum_{k=1}^n\beta_{jk}X_k + \epsilon_j, \quad \mbox{for $j=1\ldots,n$,}
\EEQ
where \emph{parents} of $X_j$ are ${\PA(j) =\{X_k: \beta_{jk}\neq 0\}}$, $\beta_{jj} = 0$ and $(\epsilon_j)_{j=1\ldots, n}$ are mutually independent, centered \ie $\Expect(\epsilon_j)=0$ and independent from variables in $\PA(j)$. We do not assume that the noise is Gaussian, and aim for a model that works across noise distributions. Structure learning here means searching for the weighted adjacency matrix $W^* = (\beta_{jk})$ of the directed acyclic graph $G$, hence constraining our optimization variable to be in the space of DAGs. 

The structure learning problems we tackle here are non-convex and penalizing to impose acyclicity also gives them high curvature. While nonconvexity is somewhat unavoidable, we focus on taming curvature, which severely limits the performance of classical gradient methods by forcing them to take short steps. Bregman gradient algorithms along the lines of \citep{Birn11,Baus16,Lu18} extend projected gradient methods using a Bregman proximal step instead of a classical projection with respect to a norm, which pushes much of the curvature in the proximal step. Implicitly, using a highly nonlinear kernel allows the method to form a better local model of the function, so the method behaves very much like gradient descent in well conditioned settings, taking longer steps. In practice then, provided certain relative smoothness conditions \citep{Baus16,Bolt18} are satisfied and the Bregman projection step can be solved efficiently (which is the critical part), Bregman gradient methods allow us to neutralize part of the curvature by measuring smoothness against a nonlinear kernel and improve convergence.

\subsection{Related Work}
Structure Learning methods historically divide into constraint-based methods that test for conditional independence relations and score-based methods that optimize a variety of heuristics. 

\paragraph{Constraint-based} In this approach, we test multiple conditional dependencies. Finding two conditionally independent variables ${X \perp\!\!\!\perp Y\,|\, Z}$ means all paths from $X$ to $Y$ are of form ${X\to \dots\to Z \to \dots \to Y}$ or ${X\leftarrow \dots\leftarrow Z \to \dots \to Y}$ \citep{koll2009}. Under restrictive hypotheses (such as faithfulness), a graph can be constructed from such relationships. A popular example of this approach is the PC algorithm \citep{kali2007}. 

\paragraph{Score-based} In a typical score-based method, a \emph{discrete} scoring function is optimized over the space of DAGs. Scoring functions can be penalized likelihood such as BIC or coming from a Bayesian approach like BDeu \citep{heck1995}. Greedy-hill climbing is popular to optimize such scores. GES \citep{Chic2004} reduces the search space to Markov equivalences classes. \cite{Van-13} use the similar $\ell_0$ penalized maximum likelihood estimator proven to be consistent in high dimensions, under favorable assumptions. Those approaches usually require a form of faithfulness which is a restrictive assumption as shown by \cite{Uhle2013}. Finally, some \emph{hybrid} methods alternate between score optimization and constraint-based updates \citep{Rask13}. However the combinatorial nature of the DAG space makes structure learning computationally challenging. In this work, we will use a particular penalty to impose DAG structure to the graph.

\paragraph{NOTEARS} \cite{Zhen2018} proposed a novel smooth characterisation of the acyclicity of an adjacency matrix ${W \in \reals^{n\times n}}$, 
\BEQ
\Tr \exp(W\circ W) - n = 0,
\nonumber \EEQ
where $\exp(\cdot)$ is the matrix exponential. This allows solving a continuous optimization problem over the whole space $\reals^{n\times n}$ subject to the acyclicity constraint, instead of a discrete DAG problem, hence use off-the-shelf solvers for smooth non-convex optimization. Unfortunately, while the function $\Tr \exp$ is convex on the space of symmetric positive definite matrices $\mathbb{S}^n$, this property does not extend to $\reals^{n\times n}$. New methods have used this constraint in association with several popular deep learning methods \citep{Yu19,ng2019masked,ng2019ae,lachapelle2020gradientbased,Ng2020}. With the exception of the original NOTEARS, all listed methods were designed for GPUs and require substantial computational power.

\paragraph{Contributions} Based on penalty terms derived in, \eg \cite{Zhen2018}, we use a non-convex Bregman composite optimization framework, to produce a more efficient algorithm for structure learning in linear SCMs. Our choice of Bregman kernel means that the method behaves as a better conditioned gradient method, and that each iteration of the Bregman proximal gradient method requires solving a convex optimization subproblem. We demonstrate the empirical effectiveness of a soft DAG constraint and competitive results even in high-dimensional settings ($m\ll n$) for a fraction of the time taken by NOTEARS.


The paper is organized as follows. We first introduce the problem of structure learning with DAG penalties and after explaining the general theory of the Bregman proximal gradient methods, we show how our task fits into this framework. We then demonstrate the effectiveness and numerical performance of our approach on several synthetic data sets and show intuitive results on real data sets.

\paragraph{Notations} We use $\normF{\cdot}$ to denote the Frobenius norm on matrices and unusually $\normone{\cdot}$ is the sum of absolute values of all the matrix coefficients \ie $\normone{W} = \sum_{i,j} |W_{ij}|$ (this make notations more compact).

\section{STRUCTURE LEARNING PROBLEM}

The linear structure learning problem is formulated as follows. Let $X$ be a $m\times n$ matrix of $m$ i.i.d. observations from a linear structural causal model \eqref{eq:linear_sem}. We can write \eqref{eq:linear_sem} in matrix form
\BEQ\BA{lr}
X=XW^*+E
,\EA
\EEQ
where $W^*=(\beta_{jk})\in \reals^{n\times n}$ is the weighted adjacency matrix of the underlying directed acyclic graph $G$ and $E=(\epsilon_j^{(i)})\in\reals^{m\times n}$ are $m \times n$ independent noise samples. We use the least-squares loss
\BEQ\BA{rl}\ell(W; X) &= \frac{1}{m}\normF{X(\idm-W)}^2,
\EA\EEQ
however, everything that follows applies to any differentiable convex loss $\ell: \reals^{n\times n}\to \reals$. With finite samples and in high-dimensions ($m\ll n$), the regularized least-squares estimator provably recovers the correct support with high probability, both in the Gaussian case \citep{arag2015} and in the non-Gaussian case \citep{Loh13}. To enforce sparsity, we use the $\ell_1$ penalty on the coefficients of $W$ as a convex relaxation of $\ell_0$, which keeps the Bregman proximal mapping convex.

It is known that $\Theta = (\Expect(X^TX))^{-1}$, the precision matrix (inverse covariance matrix), can be written, 
\BEQ
\Theta = (I-W^*)\Omega^{-1}(I-W^*)^T,
\nonumber \EEQ
where $\Omega\in\reals^{n\times n}$ is the diagonal matrix with diagonal equal to the variances of $(\epsilon_j)_{j=1\ldots,n}$. This fact has been used by \citep{loh2014} to first estimate the moralized graph using the precision matrix and then use that information to restrict the search to DAGs matching the moralized graph uncovered.

\paragraph{Identifiability}
\cite{Shim2006} proved that if $(\epsilon_j)_{j=1\ldots,n}$ are jointly independent and non-Gaussian distributed with strictly positive density, the graph is identifiable from the joint distribution. \cite{Pete2014} extended this result to Gaussian errors with equal variances. 
We refer the reader to, \eg\cite{Pete2017} for a more complete discussion.

\paragraph{DAG Penalty}
We will first assume the $(\beta_{jk})$ coefficients are positive as our method is simpler in that case and generalize from there to negative edge weights. As introduced in NOTEARS \citep{Zhen2018}, we will use the smooth characterization of acyclicity recalled in the following proposition.

\begin{proposition}\label{prop:positive_dag}
A positive weighted adjacency matrix $W$ represents an acyclic graph if and only if $\Tr(I+\alpha W)^n~=~n$. 
\end{proposition}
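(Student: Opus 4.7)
The plan is to use the classical combinatorial interpretation of matrix powers: for any nonnegative matrix $W$, the entry $(W^k)_{ij}$ equals the sum, over directed walks of length $k$ from $i$ to $j$, of the products of the edge weights along the walk. In particular, $\Tr(W^k) = \sum_i (W^k)_{ii}$ is the total weight of closed walks of length $k$ in the graph encoded by $W$.

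The first step is to expand using the binomial theorem (valid because $I$ and $W$ commute):
\BEQ
\Tr(I+\alpha W)^n \;=\; \sum_{k=0}^n \binom{n}{k}\alpha^k\,\Tr(W^k) \;=\; n + \sum_{k=1}^n \binom{n}{k}\alpha^k\,\Tr(W^k).
\nonumber \EEQ
Since $W$ has nonnegative entries and $\alpha>0$, every term in the sum is nonnegative, so $\Tr(I+\alpha W)^n = n$ if and only if $\Tr(W^k)=0$ for every $k=1,\ldots,n$.

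For the forward direction, if the graph is acyclic there are no closed walks of any positive length (a closed walk of length $k$ would contain a simple cycle), hence $\Tr(W^k)=0$ for all $k\ge 1$ and the equality holds. For the reverse direction, I will argue the contrapositive: if $W$ encodes a graph with a cycle, then it contains a simple cycle, whose length $k$ is at most $n$ since a simple cycle visits distinct vertices. Picking any vertex $i$ on this simple cycle, the product of the (strictly positive) edge weights along the cycle contributes a strictly positive term to $(W^k)_{ii}$, and all other contributions are nonnegative, so $\Tr(W^k)>0$, violating the equality.

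The only mildly delicate point is ensuring that "$\Tr(W^k)=0$ for all $k\le n$" really forces acyclicity rather than merely ruling out short cycles; this is where the bound $k\le n$ for simple cycles on $n$ vertices is essential, together with the fact that with $W\geq 0$ the sum defining $(W^k)_{ii}$ has no cancellations, so a single simple cycle of length at most $n$ is enough to make some diagonal entry positive. Everything else is a routine binomial expansion.
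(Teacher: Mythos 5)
Your proof is correct and follows essentially the same route as the paper's: a binomial expansion of $\Tr(I+\alpha W)^n$ combined with the combinatorial interpretation of $[W^k]_{ii}$ as (nonnegative) weights of closed walks, using the fact that any cycle reduces to a simple cycle of length at most $n$. Your version spells out the "no cancellation" point and the contrapositive a bit more explicitly, but the argument is the same.
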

\begin{proof}
Note that $W$ has a cycle of length $k\geq 1$ starting at $i$ if $[W^k]_{ii} > 0$. Since every cycle can be reduced to a cycle of length less or equal to $n$, 
\BEQ\BA{rl}\Tr(I + \alpha W)^n &= \sum_i \sum_{k=0}^n \binom{n}{k}\alpha^k [W^k]_{ii} \\[4pt]
&= n + \sum_i\sum_{k=1}^n \binom{n}{k}\alpha^k [W^k]_{ii}\\[4pt]
&\geq n, \nonumber
\EA\EEQ
with equality if and only if $W$ is acyclic. 
\end{proof}

Note that this choice of constraint is arbitrary, we could have equivalently chosen the constraint ${\Tr\,P(W)=n}$ for any polynomial $P$ with strictly positive coefficients and ${P(0)=1}$. As in \cite{Yu19}, we chose the form in Proposition~\ref{prop:positive_dag}, as its factored form allows for simpler expressions. 

Instead of a hard penalty, as in NOTEARS, we use a regularization term here, as in \citep{Ng2020}. Overall, we seek to solve
\BEQ\label{eq:formal_problem}
\BA{ll}
  \mbox{min.} & \frac{1}{m}\normF{X(I-W)}^2+\lambda \normone{W} + \mu\Tr (I + \alpha W)^n\\
  \mbox{s.t. }&\, W\geq 0,
\EA\EEQ
in the variable $W\in \reals^{n\times n}$, given samples $X\in \reals^{m\times n}$, $\lambda\geq 0$ and $\mu\geq 0$ control sparsity and DAG regularization respectively. The DAG regularization term in this problem has a high curvature and slows down convergence of classical gradient methods. In the next section, we look at how to efficiently solve this problem (locally) by measuring smoothness against a well-chosen, highly nonlinear kernel.

\paragraph{Handling Negative Coefficients}
Proposition~\ref{prop:positive_dag} holds only for positive edge weights. \cite{Zhen2018} alleviate this issue by squaring the adjacency matrix elementwise. However, squaring hurts regularity and we use a different approach here, writing $W$ the adjacency matrix as the difference of its positive and negative parts i.e. ${W = W^+ - W^-}$ where ${W^+ = \max(W, 0)}$ and ${W^- = \max(-W, 0)}$. 

Consider a new graph $\hat{G}$ obtained from $G$ by replacing every edge weight $\beta_{jk}$ by $|\beta_{jk}|$. Note that it's straightforward to see that $\Hat{G}$ is acyclic if and only if $G$ is acyclic. Moreover, the weighted adjacency matrix of $\Hat{G}$ is $W^++W^-$. 

The optimization problem on $G$ with acyclicity of $\Hat{G}$ is now very similar to the positive case, 
\BEQ\BA{rl}\label{eq:neg_weights_pb}
\minim &\mu \Tr \left(\idm + \alpha (W^+ + W^-)\right)^{n} \\[3pt]
    &+\frac{1}{m}\normF{X (\idm - W^++W^-)}^2 \\[3pt]
    &+ \lambda \normone{W^+} + \lambda \normone{W^-} \\[3pt]
\mbox{s. t.} & W^+, W^- \geq 0
,\EA\EEQ
where $W^+, W^- \in \reals^{n\times n}$.

We prove that ambiguous edges \ie where the weight is ill-defined cannot exist at critical points of the objective,
\begin{lemma}
No pair $W^+, W^-\in\reals_+^{n\times n} $ with indexes $(j,k)$ such that both $W^+_{jk}$ and $W^-_{jk}$ are non zeros are local minima of the objective of problem \eqref{eq:neg_weights_pb}.
\end{lemma}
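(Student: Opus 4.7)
The plan is to argue by contradiction via an explicit perturbation that zeroes out the redundancy. Suppose $(W^+, W^-)$ is a local minimum and pick $(j,k)$ with $a := W^+_{jk} > 0$ and $b := W^-_{jk} > 0$. For $0 < \epsilon < \min(a,b)$, define
\[
  \tilde W^+ = W^+ - \epsilon\, e_j e_k^T, \qquad \tilde W^- = W^- - \epsilon\, e_j e_k^T,
\]
which remain entrywise nonnegative and so are feasible. The key observation is that this perturbation preserves the difference $W^+ - W^-$ while strictly reducing the common part $W^+ + W^-$ only at the single coordinate $(j,k)$.

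I would then check each term of the objective separately. First, the least-squares term is unchanged, since $\tilde W^+ - \tilde W^- = W^+ - W^-$ and hence $I - \tilde W^+ + \tilde W^- = I - W^+ + W^-$. Second, for the DAG regularizer, expanding as in Proposition~\ref{prop:positive_dag} gives
\[
  \Tr(I + \alpha(W^+ + W^-))^n = n + \sum_i \sum_{k=1}^n \binom{n}{k} \alpha^k [(W^+ + W^-)^k]_{ii},
\]
and each summand is a polynomial with nonnegative coefficients in the entries of $W^+ + W^-$, which are themselves nonnegative; decreasing any such entry can only decrease this quantity, so the DAG term is non-increasing under the perturbation. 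Third, the $\ell_1$ penalty strictly decreases by exactly $2\lambda\epsilon$ (assuming $\lambda > 0$, the regime of interest) because both $W^+_{jk}$ and $W^-_{jk}$ were strictly positive.

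Putting these together, the objective strictly decreases by at least $2\lambda\epsilon$ for every sufficiently small $\epsilon > 0$, contradicting the local minimality of $(W^+, W^-)$. The main step that needs a clean statement is the monotonicity of the DAG penalty in the entries of $W^+ + W^-$, but this is transparent from the nonnegative-coefficient polynomial representation above; no real obstacle arises since we deliberately chose a perturbation direction that keeps the data-fidelity term exactly invariant.
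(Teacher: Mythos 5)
Your proof is correct and follows essentially the same approach as the paper: perturb both matrices at the $(j,k)$ entry so that $W^+ - W^-$ (and hence the least-squares term) is unchanged, the $\ell_1$ term strictly decreases, and the DAG term is non-increasing by monotonicity in the entries of $W^+ + W^-$. If anything, your $\epsilon$-parameterized family of perturbations addresses the \emph{local} minimality claim slightly more carefully than the paper's single perturbation by $\min(W^+_{jk}, W^-_{jk})$, and you rightly flag that the strict decrease relies on $\lambda>0$.
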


\begin{proof}
Without loss of generality, assume ${W^+_{jk}\geq W^-_{jk}>0}$, let $\Tilde{W}^+$ be the same matrix as $W^+$ except at $(j, k)$ where $\Tilde{W}^+_{jk}=W^+_{jk}- W^-_{jk}$ and similarly with $\Tilde{W}^-$ where $\Tilde{W}^-_{ij}=0$, then $\Tilde{W}^+, \Tilde{W}^-\geq 0$, $\Tilde{W}^+-\Tilde{W}^- = W^+-W^-$ and $\normone{\Tilde{W}^\pm} < \normone{W^\pm}$. Moreover, $\Tilde{W}^\pm \leq W^\pm$ elementwise and the DAG regularization term is increasing in every entry of the matrix.

So ${(\Tilde{W}^+,\Tilde{W}^-)}$ has a strictly better objective than ${(W^+,W^-)}$.
\end{proof}

This means that the acyclicity constraints on $\Hat{G}$ and $G$ are completely equivalent.

\paragraph{Parameter estimation} We mostly focus on structure learning here, \ie~estimation of the support of the adjacency matrix and disregard the problem of parameter estimation \ie~learning the actual values of $(\beta_{jk})_{j,k=1\ldots,n}$. However, support estimation is the challenging task: once a correct graph $G$ is known, parameter estimation in the regularized least square setting is a convex problem.

\section{BREGMAN GRADIENT METHODS}\label{s:bregman}

Problems~\eqref{eq:formal_problem}~and~\eqref{eq:neg_weights_pb} are non-convex and the DAG penalty makes them highly nonlinear. Under more flexible relative smoothness assumptions introduced by \cite{Bolt18}, the performance and analysis of Bregman proximal gradient methods \citep{Birn11,Baus16,Lu18} becomes much closer to that of classical gradient methods. In practice, Bregman gradient methods allow us to neutralize at least part of the effect of nonlinearities by measuring smoothness against a more nonlinear kernel than that produced by norms. Here, we use a variant of these methods introduced by \cite{Drag19}, that uses dynamical step size for faster convergence. 

We first briefly recall the structure of Bregman gradient methods in the relative smoothness setting (aka the NoLips algorithm), as described in \citep{Bolt18}. Let $E$ be a Euclidean vector space endowed with an inner product $\la \cdot, \cdot \ra$. The method aims at solving non-convex composite minimization problems of the form
\BEQ \label{eq:min_problem}
\minim_{x\in \bar{C}} f(x)+g(x),
\EEQ
where $C$ is a nonempty convex open set in $\reals^n$, $f$ is a proper $C^1$ function with $\dom f \cap C \neq \emptyset$ and $g$ is a proper and lower semicontinuous function.

\subsection{Kernels, Bregman Divergences \& Relative Smoothness}
Let $h : E \rightarrow \reals$ be a differentiable strictly convex function, which is called the \textit{distance kernel}. It generates the \textit{Bregman distance}
\BEQ\label{eq:bregman_distance}
  D_h(x,y) = h(x) - h(y) - \la \nabla h(y), x-y \ra.
\EEQ
Note that $D_h$ is generally asymmetric, therefore it is not a proper distance and is sometimes referred to as a \textit{Bregman divergence}. However, we can deduce from the strict convexity of $h$ that $D_h$ enjoys a distance-like separation property: $D_h(x,x)~=~0$ and $D_h(x,y)~>~0$ for $x \neq y$.

We are now ready to define the notion of relative smoothness, also called L-smooth adaptability in \cite{Bolt18}.

\begin{definition}[Relative smoothness]
We say that a differentiable function $f : E \rightarrow \reals$ is L-smooth relatively to the distance kernel $h$ if there exists $L > 0$ such that 
\BEQ\label{eq:relative_smoothness}\tag{RelSmooth}
  f(x) \leq f(y) + \la \nabla f(y), x-y \ra + L D_h(x,y),
\EEQ
for all $x, y \in E$.
\end{definition}

There are several convenient ways of checking the relative smoothness assumption. In the differentiable case, an equivalent definition reads as follows. 

\begin{definition}[Relative smoothness II]
  We say that a $C^\infty$ function $f:\reals^{p\times p} \to \reals$ is L-smooth relatively to the distance kernel $h$, if for all $W\in \reals^{p\times p}$,
  \BEQ
  \nabla^2 f(W) \preceq L \nabla^2 h(W).
  \EEQ
\end{definition}

Note that if $h(x) = \frac{1}{2} \|x\|^2$ is the quadratic kernel, then $D_h(x,y) = \frac{1}{2}\|x-y\|^2$ and \eqref{eq:relative_smoothness} is equivalent to Lipschitz continuity of the gradient. By using different kernels, such as logarithms or power functions, it is possible to show that \eqref{eq:relative_smoothness} holds for functions that do not have a Lipschitz continuous gradient --- see \eg \citep{Baus16,Bolt18}.

\paragraph{Bregman Gradient Method} Now that we are equipped with a non-Euclidean geometry generated by $h$, we can define the Bregman proximal gradient map with step size $\gamma$ as follows
\BEA\label{eq:bregman_prox}
&& T_\gamma(x) \triangleq \\
&& \argmin_{u \in C} \left\{ g(u) + \la \nabla f(x), u-x \ra + \frac{1}{\gamma} D_h(u,x) \right\}.\nonumber
\EEA

The Bregman gradient method then simply iterates this mapping as in Algorithm~\ref{algo:nolips}.

\begin{algorithm}[H]
	\begin{algorithmic}
		\REQUIRE A function $h$ such that \eqref{eq:relative_smoothness} holds with relative Lipschitz constant $L$, and step size $0~<~\gamma~\leq~\frac{1}{L}$.
		\STATE Initialize $x_0 \in C$.
		\FOR{k = 1,2,\dots}
		\STATE 
		  $x^k \in T_\gamma(x^{k-1}) $
		\ENDFOR
	\end{algorithmic}
	\caption{Bregman Gradient Method}
	\label{algo:nolips}
\end{algorithm}

If $h$ is the squared Euclidean norm, we recover the projected gradient algorithm. In general, this of course assumes that the Bregman proximal map $T_\gamma(x)$ is simple to compute. We will see that in our case here, solving the iteration map is simply a convex problem. 

It can be easily be proved that, under the relative smoothness condition and with $\gamma \in (0, \frac{1}{L})$, the sequence $\{f(x^k)\}_{k\geq 0}$ is nonincreasing. Convergence towards a critical point of the problem \eqref{eq:min_problem} is established in \citep{Bolt18} under additional assumptions (boundedness of the sequence and Kurdyka-Lojasiewicz property) which will hold in our case.

\begin{figure*}[t]
  \centering
  \includegraphics[width=0.49\linewidth]{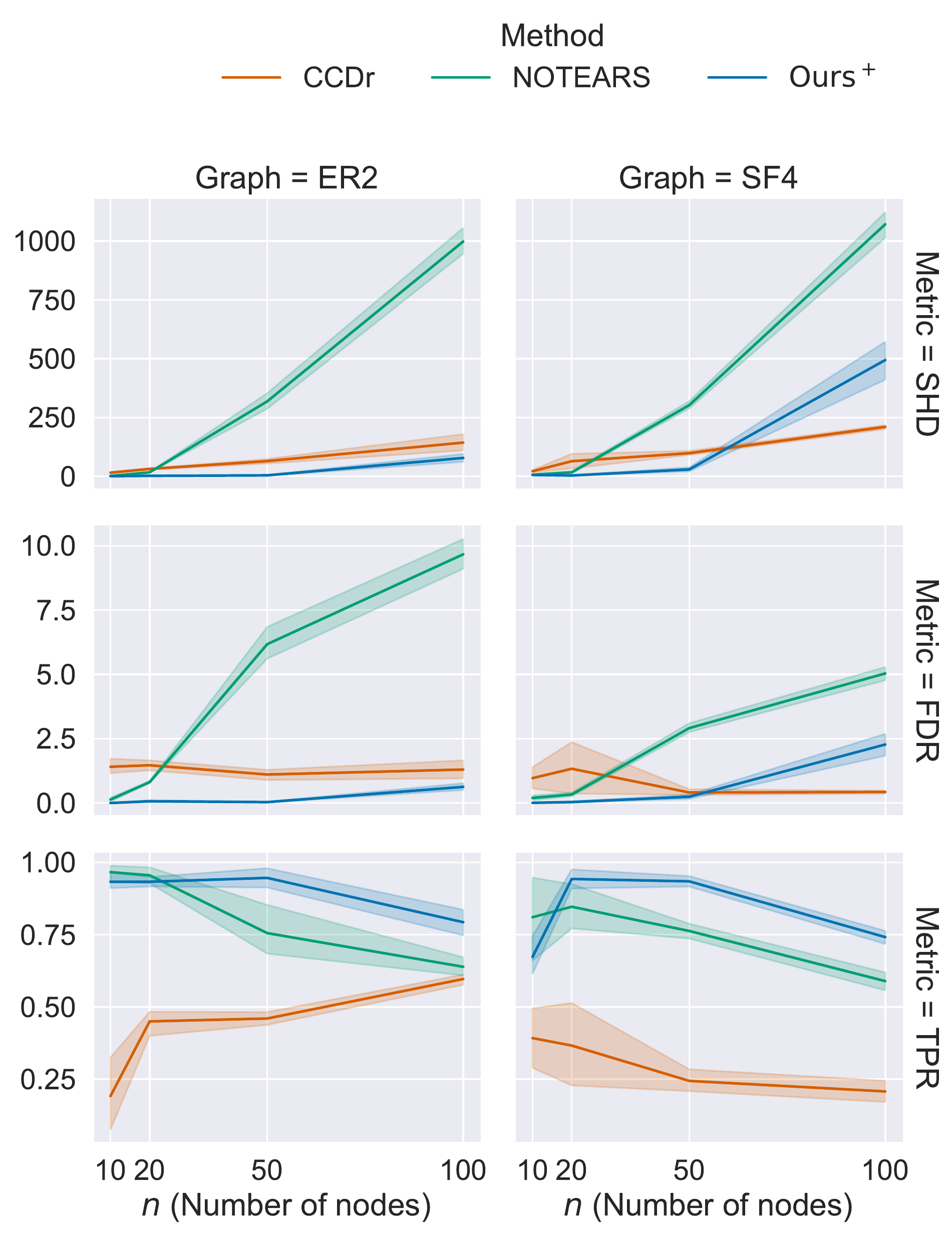}
  \includegraphics[width=0.49\linewidth]{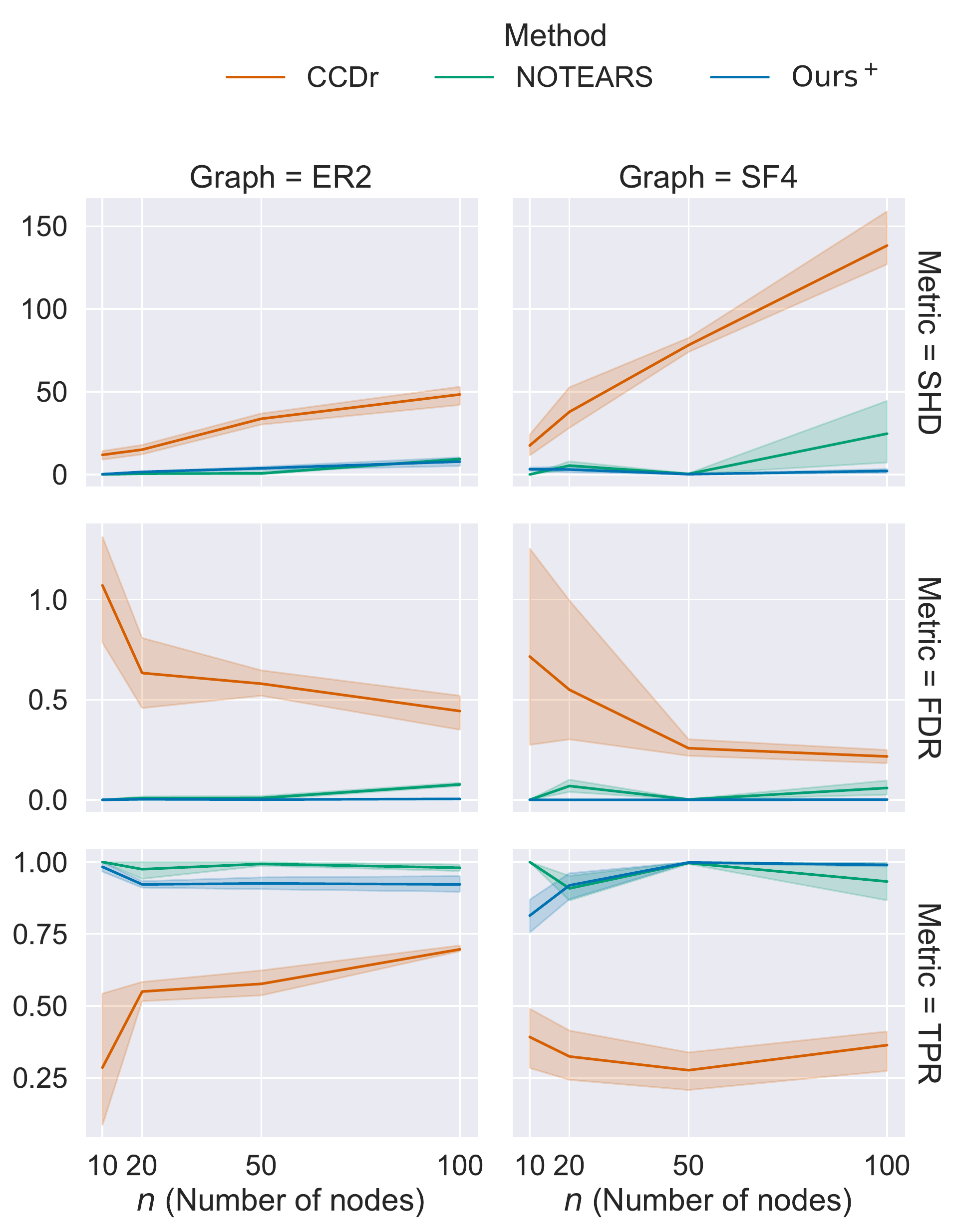}
  \vspace{.3in}
  \caption{Performance for $m=50$ (left) and $m=200$ (right) lower is better except for TPR. The method proposed here performs consitently well compared to the two other.}
  \label{fig:shd_fdr_vs_n}
\end{figure*}

\subsection{Bregman Method For Structure Learning}
In the case where $W^*$ is assumed positive, we can write problem~\eqref{eq:formal_problem} in the form~\eqref{eq:min_problem} with
\BEQ
\BA{rl}
f(W) &=\mu\Tr(\idm + \alpha W)^n\\
g(W) &= \frac{1}{m}\normF{X(I-W)}^2+\lambda \normone{W}
,\EA
\EEQ
two $C^\infty$ functions on $(0, +\infty)^{n\times n}$ that satisfy the Bregman Gradient methods' assumptions. 

\paragraph{Relative Smoothness} We choose a kernel which resembles our penalty function,
\BEQ\label{eq:kernel_pos}
h(W) = \mu(n-1)(1 + \alpha \normF{W})^n,
\EEQ
a $C^\infty$, convex function. We need to define the following convex subspace
\BEQ\BA{rl}
C_{\alpha} = \Big\{W\in\reals_+^{n\times n}\text{ such that } \sum_{ij}W_{ij}\geq \frac{n}{(n-2)\alpha}\Big\},\nonumber
\EA\EEQ
and get the following result.

\begin{theorem}
The DAG penalty $f$ is $1$-smooth relatively to $h$ on the convex space $C_{\alpha}$. 
\end{theorem}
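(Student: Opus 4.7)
The plan is to verify the second-order form of relative smoothness (Definition Relative Smoothness II), i.e., show $\nabla^2 f(W)\preceq \nabla^2 h(W)$ pointwise on $C_\alpha$, or equivalently $\nabla^2 f(W)[V,V]\leq\nabla^2 h(W)[V,V]$ for every direction $V\in\reals^{n\times n}$. The role of $C_\alpha$ is minimal but essential: the constraints $W\geq 0$ and $\sum_{ij}W_{ij}\geq n/((n-2)\alpha)$, combined with the elementary bound $\normone{W}\leq n\normF{W}$, force $\rho\triangleq\normF{W}>0$, so $h$ is $C^\infty$ throughout $C_\alpha$ and the Hessian comparison is well posed.

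First I compute and upper bound $\nabla^2 f$. With $A=\idm+\alpha W$, differentiating the identity $\frac{d}{dt}\Tr(A+tB)^n=n\Tr[(A+tB)^{n-1}B]$ once more in $t$ yields the sandwich-trace expansion
\BEQ
\nabla^2 f(W)[V,V]=\mu n\alpha^2\sum_{k=0}^{n-2}\Tr\left[A^k V A^{n-2-k}V\right].
\EEQ
Combining the Cauchy--Schwarz bound $|\Tr[XY]|\leq\normF{X}\normF{Y}$ with submultiplicativity $\normF{XY}\leq\|X\|_{\mathrm{op}}\normF{Y}$, and using $\|W\|_{\mathrm{op}}\leq\normF{W}=\rho$ together with $\|A\|_{\mathrm{op}}\leq 1+\alpha\rho$, I can bound every summand by $(1+\alpha\rho)^{n-2}\normF{V}^2$, producing the clean upper bound
\BEQ
\nabla^2 f(W)[V,V]\leq \mu n(n-1)\alpha^2(1+\alpha\rho)^{n-2}\normF{V}^2.
\EEQ

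Next I compute $\nabla^2 h$ via the chain rule. Writing $h=\mu(n-1)\psi(\rho)$ with $\psi(s)=(1+\alpha s)^n$ and using the matrix-calculus identities $\nabla\rho=W/\rho$ and $\nabla^2\rho[V,V]=(\normF{V}^2-c^2)/\rho$ with $c=\la W,V\ra/\rho$, I obtain
\BEQ\BA{rl}
\nabla^2 h(W)[V,V]=&\mu n(n-1)^2\alpha^2(1+\alpha\rho)^{n-2}c^2\\
&+\mu n(n-1)\alpha(1+\alpha\rho)^{n-1}(\normF{V}^2-c^2)/\rho.
\EA\EEQ
Dividing the target inequality by the positive factor $\mu n(n-1)\alpha(1+\alpha\rho)^{n-2}$ reduces it, after one line of algebra, to
\BEQ
(n-2)\alpha c^2+(\normF{V}^2-c^2)/\rho\geq 0,
\EEQ
which is immediate from $c^2\geq 0$, $\normF{V}^2-c^2\geq 0$ (Cauchy--Schwarz), $n\geq 2$, and $\rho>0$ on $C_\alpha$; this gives the claim with $L=1$.

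The main obstacle is the Hessian bookkeeping: correctly expanding the second derivative of $\Tr(\idm+\alpha W)^n$ into a sum of sandwich traces, and isolating both the parallel ($c^2$) and orthogonal ($\normF{V}^2-c^2$) contributions in $\nabla^2 h$. Once those are in place, the actual comparison collapses to the one-line nonnegativity check above.
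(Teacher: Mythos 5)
Your proof is correct. The route — verifying the second-order condition $\nabla^2 f(W)\preceq\nabla^2 h(W)$ via the sandwich-trace expansion of $\nabla^2\Tr(\idm+\alpha W)^n$, the operator-norm/Cauchy--Schwarz bound $\nabla^2 f(W)[V,V]\leq\mu n(n-1)\alpha^2(1+\alpha\normF{W})^{n-2}\normF{V}^2$, and the radial/tangential split of the Hessian of the kernel $h$ — is exactly the one the paper sets up with its ``Relative smoothness II'' definition, and all three displayed computations check out. One point worth flagging: your final inequality $(n-2)\alpha c^2+(\normF{V}^2-c^2)/\rho\geq 0$ holds for every $W\neq 0$, so your argument never actually invokes the specific threshold $\sum_{ij}W_{ij}\geq n/((n-2)\alpha)$ defining $C_\alpha$ — you only use it to guarantee $\rho>0$ so that $h$ is twice differentiable and the Hessian comparison integrates along segments of the convex set. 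The paper's choice of that particular constant (which is exactly what forces $(n-2)\alpha\rho\geq 1$ via $\normone{W}\leq n\normF{W}$) strongly suggests its supplementary proof uses a cruder chain of bounds in which the term $(\normF{V}^2-c^2)/\rho$ is traded against $(n-2)\alpha$ rather than simply dropped; your version is therefore slightly sharper, establishing relative smoothness on any convex subset of $\reals_+^{n\times n}$ bounded away from the origin. You should state explicitly that convexity of $C_\alpha$ is what lets the pointwise Hessian inequality upgrade to the global inequality \eqref{eq:relative_smoothness} (Taylor with integral remainder along the segment), but that is a one-line remark, not a gap.
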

The proof is left in the supplementary material. Note that we need to make the extra hypothesis that ${W^*\in C_{\alpha}}$, this assumption  is not very restrictive as we can choose $\alpha$, so this only impacts the regularity constants. 

In the general case, where $W^*$ takes on both positive and negative values, we change our DAG penalty function to:
\BEQ
\BA{rl}
f(W^+, W^-) &=\mu\Tr(\idm + \alpha (W^++W^-))^n
,\EA
\EEQ
a function with similar properties than in the positive case. We then define
\BEQ\label{eq:kernel_neg}
h(W^+, W^-) = \mu(n-1)(1 + \alpha \normF{W^++W^-})^n,
\EEQ
a $C^\infty$, convex function as our kernel, and the convex space
\BEQ\BA{rl}
C^+_{\alpha} &= \Big\{W^+,W^-\in\reals_+^{n\times n}\text{ s.t. }\\
&\qquad\sum_{ij}[W^++W^-]_{ij}\geq \frac{n}{(n-2)\alpha}\Big\}.\nonumber
\EA\EEQ
\begin{theorem}\label{th:rel_smooth_gen}
$f:\left(\reals_+^{n\times n}\right)^2\to \reals$ is 1-smooth relatively to $h$ on $C^+_{\alpha}$.
\end{theorem}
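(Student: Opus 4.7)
The plan is to reduce Theorem~\ref{th:rel_smooth_gen} to the positive-case theorem by observing that both $f$ and $h$ depend on $(W^+, W^-)$ only through the sum $S := W^+ + W^-$. Concretely, write $\phi(Z) := \mu \Tr(I + \alpha Z)^n$ and $\psi(Z) := \mu (n-1)(1 + \alpha \normF{Z})^n$, the penalty and kernel already used in the positive-case theorem, so that
\BEQ
f(W^+, W^-) = \phi(W^+ + W^-), \qquad h(W^+, W^-) = \psi(W^+ + W^-).
\nonumber \EEQ
The positive-case theorem gives $\phi$ is $1$-smooth relative to $\psi$ on $C_\alpha$.

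The first step is to check that the constraint sets match: by definition, $(W^+, W^-) \in C^+_\alpha$ implies $S = W^+ + W^- \in C_\alpha$, so the positive-case relative smoothness inequality
\BEQ
\phi(S_1) \le \phi(S_2) + \langle \nabla \phi(S_2), S_1 - S_2 \rangle + D_\psi(S_1, S_2)
\nonumber \EEQ
is available for any $S_i = W_i^+ + W_i^-$ coming from feasible pairs.

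The second step is to translate each term back. By the chain rule applied to the linear map $(W^+, W^-) \mapsto W^+ + W^-$, the partial gradients of $f$ satisfy $\nabla_{W^+} f = \nabla_{W^-} f = \nabla \phi(S)$, and analogously for $h$. Consequently, writing $W_i = (W_i^+, W_i^-)$,
\BEQ
\langle \nabla f(W_2), W_1 - W_2 \rangle = \langle \nabla \phi(S_2), S_1 - S_2 \rangle,
\nonumber \EEQ
and the same identity with $(h, \psi)$ in place of $(f, \phi)$ gives $D_h(W_1, W_2) = D_\psi(S_1, S_2)$. Substituting these identities into the inequality above yields exactly \eqref{eq:relative_smoothness} for $f$, $h$, with $L = 1$, on $C^+_\alpha$.

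The only nontrivial content is the positive-case theorem, which is handled in the supplementary material; the reduction above is mechanical once one recognizes that $f$ and $h$ factor through the linear map $(W^+, W^-) \mapsto W^+ + W^-$ and that the Bregman divergence is invariant under this reparametrization. The main potential pitfall is to make sure the domain condition really transfers, which it does by the very definition of $C^+_\alpha$ as the preimage of $C_\alpha$ under this linear map.
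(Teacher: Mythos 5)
Your reduction is correct. The key observations---that both $f$ and $h$ factor through the linear map $A(W^+,W^-) = W^+ + W^-$, that gradients and Bregman divergences transform covariantly under precomposition with a linear map (so $\langle \nabla f(W_2), W_1 - W_2\rangle = \langle \nabla\phi(S_2), S_1-S_2\rangle$ and $D_h(W_1,W_2) = D_\psi(S_1,S_2)$), and that $A$ maps $C^+_\alpha$ into $C_\alpha$---do reduce the general statement to the positive-case theorem with no loss. The paper describes its own argument as ``a natural extension of the proof in the positive case,'' i.e.\ it re-runs the relative-smoothness estimate (the Hessian comparison $\nabla^2 f \preceq \nabla^2 h$) directly in the doubled variable $(W^+,W^-)$; at the Hessian level this amounts to the same fact you are using, since precomposition with $A$ turns each Hessian into $A^* H A$ and this preserves the Loewner ordering. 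Your formulation is the more modular of the two: it uses the positive-case theorem strictly as a black box and makes explicit which property of the variable-doubling trick is exploited, whereas the paper's route re-derives the estimates but would also cover any future variant where $f$ and $h$ fail to factor through the same map. Two small caveats on your write-up: (i) $C^+_\alpha$ is not literally the preimage $A^{-1}(C_\alpha)$---it is that preimage intersected with $\left(\reals_+^{n\times n}\right)^2$---but you only need the forward inclusion $A(C^+_\alpha)\subseteq C_\alpha$, which holds; (ii) the composed kernel $h=\psi\circ A$ is no longer strictly convex in the pair $(W^+,W^-)$ (it is constant along directions $(V,-V)$), though this is a feature of the paper's own choice of kernel in~\eqref{eq:kernel_neg} and does not affect the inequality the theorem asserts.
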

The proof, a natural extension of the proof in the positive case, is available to the reader in the supplementary material. The Bregman proximal gradient map then writes very similarly to \eqref{eq:bregman_prox_pos}.

\paragraph{Bregman Prox} Since $g$ is a convex function, computing the solution $T_\gamma(W_k)$ of the Bregman proximal gradient map means solving a convex minimization problem, written
\BEQ
\minim_{W \in C_\alpha} \Big\{ g(W) + \la \nabla f(W_k), W-W_k \ra + \frac{1}{\gamma} D_h(W,W_k) \Big\},\nonumber
\EEQ
which is again
\BEQ\label{eq:bregman_prox_pos}
\minim_{W \in C_\alpha} \Big\{\frac{1}{m}\normF{X(I-W)}^2 +\lambda \normone{W} \hfill + \la \nabla_k , W \ra + \frac{1}{\gamma} h(W) \Big\}.
\EEQ
where $\gamma>0$ and $\nabla_k = \nabla f(W_k) - \frac{1}{\gamma}\nabla h(W_k)$. In our case, this means minimizing a sum of convex functions with linear constraints, and can therefore be solved efficiently using off-the-shelf convex optimization solvers, \eg ECOS \citep{Doma13}, MOSEK \citep{mosek}. 

Having properly defined the functions and corresponding kernels as in  \eqref{eq:min_problem}, we can now apply algorithm \ref{algo:nolips}. As a final step to our method, we threshold the output matrix $W$ to get a binary adjacency matrix, to zero out negligible coefficients and because hard thresholding has been proven to reduce false discovery rate \citep{wang16notears}.

\begin{figure*}[ht]
  \centering
  \includegraphics[width=0.6\linewidth]{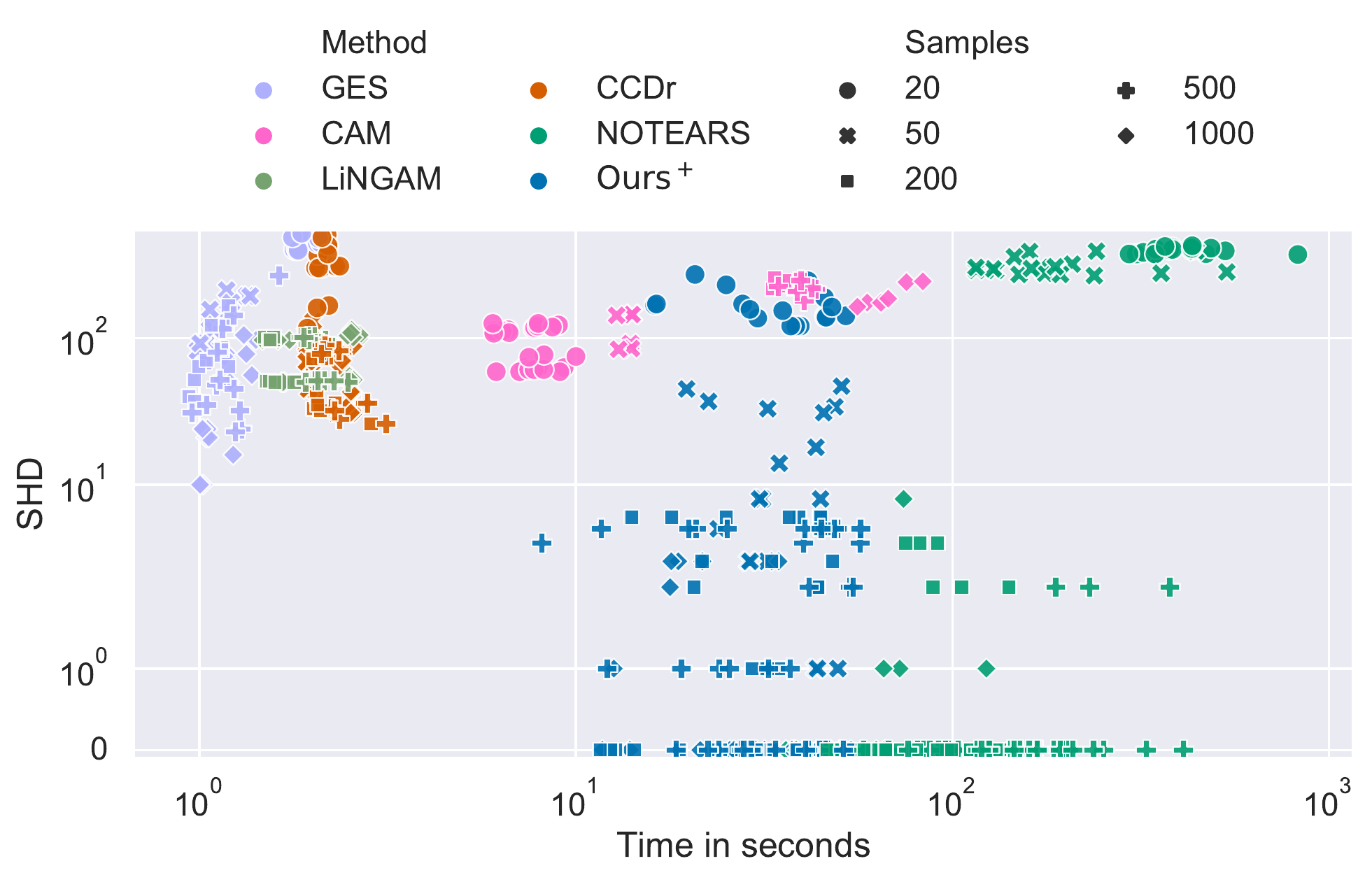}
  \vspace{.3in}
  \caption{Performance vs. runtime for $n=50$: for every dataset (with number of samples $m$, mean degree $k$, noise type and random seed), we plot Structural Hamming Distance (SHD) against run time of the corresponding algorithm.}
  \label{fig:shd_vs_time}
\end{figure*}

\section{EXPERIMENTS}

We compare our method against GES \citep{chic2002}, LiNGAM \citep{shim2014}, CAM \citep{Buhl2014}, CCDr \citep{arag2015ccdr} and NOTEARS \citep{Zhen2018}. Notably, it seems that LiNGAM fails to run on ill-conditioned covariance matrices. With the exception of NOTEARS, we use the implementation available as part of the Causal Discovery Toolbox by \cite{Kala2019}. However, we only report results against the following methods which perform better on the given tasks:
\BIT
\item \textbf{NOTEARS}\footnote{\href{https://github.com/xunzheng/notears}{https://github.com/xunzheng/notears}}\citep{Zhen2018}: we use the most recent code version that was updated to use a DAG constraint in form $\Tr(\idm+\alpha W\circ W)^n-n=0$ \citep{Yu19} instead of $\Tr e^{W\circ W}-n=0$ in the original paper.
\item \textbf{CCDr} \citep{arag2015}: they use a concave penalty, interpolation of the $\ell_0$ and $\ell_1$ penalty and reparametrize the Gaussian likelihood estimator into a convex objective. Finally they use coordinate descent on the obtained objective. 
\EIT
We did not test methods such as \cite{Yu19,ng2019masked,ng2019ae,lachapelle2020gradientbased,Ng2020} that run on GPU and necessitate substantial computational power.

\paragraph{Metrics} To compare the output of our model to the ground truth graph in synthetic examples, we let $\mathrm{TP}$ be the number of correctly detected edges and distinguish three error sources: $M$ counts the missing edges compared to the skeleton, $E$ counts the extra ones and $R$ the reversed edges from the ground truth directed graph. The most standard metric in Structure Learning is Structural Hamming Distance (SHD), the number of additions, deletions, reversals to go from our output graph to the true graph, hence $\SHD=M+E+R$. Other interesting metrics including False Discovery Rate (FDR) with $\FDR=(E+R)/p$ where $p$ is the true number of edges and True Positive Rate (TPR) where $\mathrm{TPR}=\mathrm{TP}/p$.

\paragraph{Negative Weights}
In our experiments, we notice that, even though the ground truth adjacency matrix $W^*$ contains both positive and negative coefficients (half in expectation), both NOTEARS and our algorithm, even when they perfectly recover the support, estimate all parameters to be positive. As an illustration, in our experiments, the proportion of correctly predicted edges by NOTEARS that were given a positive weight is ${99.4(\pm 1.5)\%}$ --- the minimum being $94.7\%$. Moreover, our algorithm restricted to positive coefficients \ie solving problem \eqref{eq:formal_problem} performs better than the general algorithm --- even when the graph has negative weighted edges. Therefore we present results with this algorithm denoted $\mathrm{Ours}^+$. We can't fully explain this phenomenon at this point.

\begin{figure}[h!]
  \centering
  \includegraphics[width=0.6\linewidth]{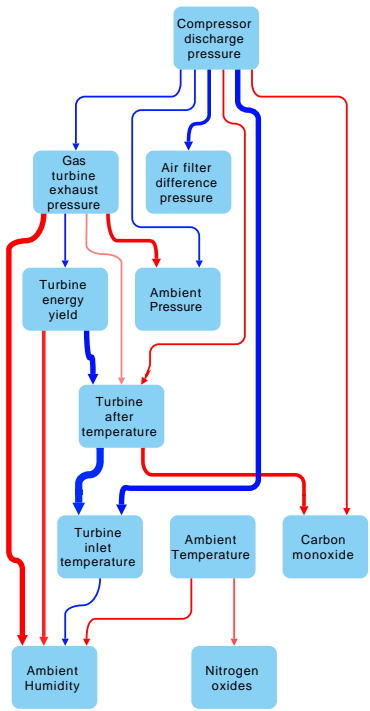} 
  \vspace{.3in}
  \caption{Graph learned on the Turbine dataset: color hue represents coefficient sign (blue is positive, red is negative); color intensity represents edge weight}
  \label{fig:turbine}
\end{figure}

\subsection{Synthetic Datasets}
We choose an experimental setup similar to the one of \cite{Zhen2018}. Our datasets vary on 5 different points: number of nodes, sparsity, graph type, noise type and number of samples. We generate $G$ from one of two types of random graphs: Erd\"os-Rényi (ER) or scale-free (SF). We sample graphs with $kn$ (${k=2,4}$) edges on average and denote the corresponding graph ER$k$/SF$k$. Given $G$, we assign random uniform weights to the edges in a fixed range. Then we generate $m \in [20, 50, 100, 200, 500, 1000]$ i.i.d. samples from the distribution entailed by the graph with one of three noise distributions: Gaussian, Exponential or Gumbel noise. 
\begin{figure*}[t!]
  \centering
  \includegraphics[width=0.8\linewidth]{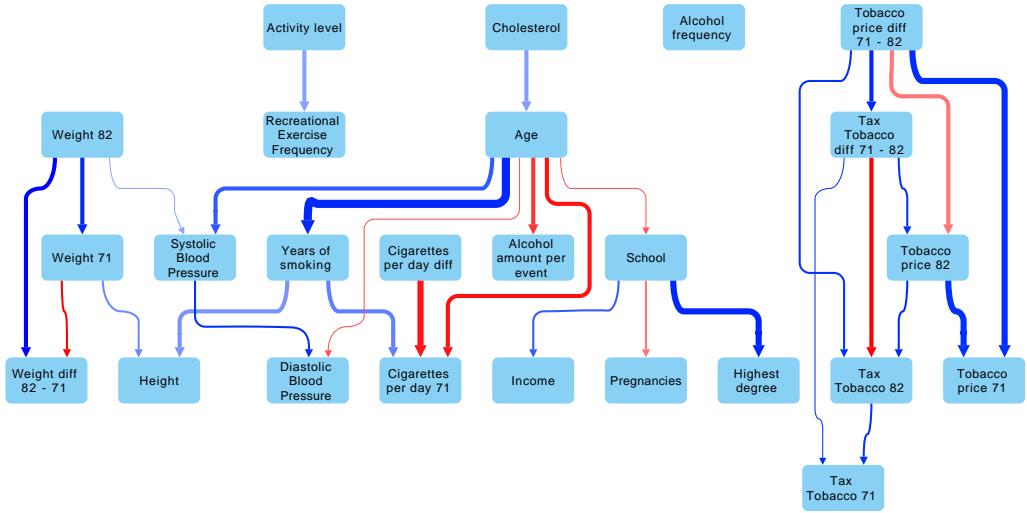}
  \caption{Graph learned on the NHEFS dataset: color hue represents coefficient sign (blue is positive, red is negative); width represents reliability from cross-validation; color intensity represents edge weight averaged over the $5$-fold cross validation}
  \label{fig:nhefs}
\end{figure*}

The results presented on Figure~\ref{fig:shd_fdr_vs_n} are aggregated over the three different noise types, several parameter choices ($\lambda$) and two random seeds. More detailed results (split by noise type and on ER$4$/SF$2$) can be found in Supplementary. 

From $m=200$ samples onward, both NOTEARS and our method learn the graph to quasi-perfection ($\SHD<10$) whereas CCDr stagnates to a sizable error level even with more data. On the other hand, when less data is available, CCDr becomes very competitive while NOTEARS has significant SHD but our method still behaves reasonably well, especially on sparser datasets such as ER$2$. 

Interestingly, we can see on Figure~\ref{fig:shd_vs_time} that the performance of our method increases steadily given more samples, unlike CCDr, while being approximately 10 times faster than NOTEARS, with better performance at a fixed number of samples. GES, LiNGAM and CAM are all much faster but with much worst SHD results, failing to get below 10 errors on datasets containing around $50$ or $100$ edges ($n=50$, $k=2, 4$).

\subsection{Real Data}

We run our algorithm on two real datasets. 
\paragraph{Gas Turbine} This dataset is composed of $36733$ instances of $11$ sensor measures from a Turkish gas turbine.\footnote{\href{https://archive.ics.uci.edu/ml/datasets/Gas+Turbine+CO+and+NOx+Emission+Data+Set}{Available here}} Results can be seen on Figure \ref{fig:turbine} and show key variables influencing emissions.

\paragraph{NHEFS}The NHANES I Epidemiologic Follow-up Study (NHEFS) is a clinical study that followed a cohort from 1971-75 yo 1982 \citep{hern2020}. Data include the initial examination, the 1982 follow up and data about their environment at both times, \eg tobacco prices. 

The dataset included missing data, which we inferred using the \texttt{IterativeImputer} of \texttt{scikit-learn}, which regresses the missing values using all the remaining variables. We selected numerical and ordinal variables only, and normalized them. Our preprocessed dataset is now composed of $m=1629$ patients with $n=25$ measurements each. We ran $5-$fold cross validation and report the graph where edges weights are averaged over splits (absolute weight shown as color intensity, sign as color) and frequency of the corresponding edge is shown as the edge width. The output of our algorithm is shown in Figure~\ref{fig:nhefs}. We observe some intuitive connections such as a positive weight \texttt{school} $\to$ \texttt{highest degree}. The algorithm, which is unsupervised, also managed to group together all variables related to tobacco prices.

\section{CONCLUSION AND FUTURE WORK}
Using penalties to enforce acyclicity and thanks to an appropriate choice of kernel, we develop a Bregman proximal gradient method for structure learning on linear structural causal models with good regularity properties, for which each Bregman proximal step amounts to solving a convex quadratic program. This allows the method to make longer steps and significantly improves convergence. The method has relatively low complexity and is uniformly competitive with existing algorithms on various synthetic data sets. We test it on two real data sets where it produces intuitive DAG structures.

\subsubsection*{Acknowledgements} 
A.A. is at the d\'epartement d’informatique de l'ENS, l'\'Ecole normale sup\'erieure, UMR CNRS 8548, PSL Research University, 75005 Paris, France, and INRIA. AA would like to acknowledge support from the {\em ML and Optimisation} joint research initiative with the {\em fonds AXA pour la recherche} and Kamet Ventures, a Google focused award, as well as funding by the French government under management of Agence Nationale de la Recherche as part of the "Investissements d'avenir" program, reference ANR-19-P3IA-0001 (PRAIRIE 3IA Institute).
\clearpage

\bibliography{biblio.bib,MainPerso.bib}


\end{document}


%

%

\onecolumn
\aistatstitle{A Bregman Method for Structure Learning on\\ Sparse Directed Acyclic Graphs}
\section{Dynamic NoLips}

In this work, we use the dynamic variant of NoLips of \cite{Drag19} that uses more aggressive step size for faster convergence, 

\begin{algorithm}[H]
	\begin{algorithmic}
		\REQUIRE A function $h$ such that $f$ is smooth relatively to $h$,  initial step size $\gamma_0>0$ and maximal step size $\gamma_{\max}$.
		\STATE Initialize $W \in C$ such that $\Psi(W)<\infty$.
		\STATE $\gamma \gets \gamma_{0}$
		\REPEAT
    		\REPEAT 
    		\STATE $W^+ \gets T_\gamma(W) $
    		\IF{the decrease condition \eqref{eq:suff_decrease} is not verified}
    	        \STATE $\gamma \gets \gamma/2$
    		\ENDIF
    		\UNTIL{decrease condition \eqref{eq:suff_decrease} is verified}
    		\STATE $W \gets T_\gamma(W) $
    		\STATE $\gamma \gets \min(2\gamma, \gamma_{\max}) $
		\UNTIL{convergence criterion}
	\end{algorithmic}
	\caption{Dynamic NoLips}
	\label{algo:dynnolips}
\end{algorithm}

The sufficient decrease condition is, for $W^+ = T_{\gamma}(W)$, 
\BEQ\BA{rl}\label{eq:suff_decrease}
f(W^+) \leq f(W) + \la\nabla f(W), W^+ - W\ra + \frac{1}{\gamma}D_h(W^+, W).
\EA\EEQ

\cite{Drag19} proves that this modified version of NoLips allows for larger step sizes while maintaining convergence guaranties. We refer the reader to their work for further information and proofs.  

As a convergence criterion, we use convergence of the $L_2$ error, \ie  for $\ell_k = \frac{1}{m}\normF{X(\idm - W_k)}^2$ being the $L_2$ error at step $k$, we stop when $\left|\frac{\ell_k - \ell_{k-1}}{\ell_{k-1}}\right|\leq \tau$, with an user-defined $\tau$.

\section{Proofs of relative smoothness}

This section provides proofs for theorems 3.3 and 3.4 that were left out in the main paper. 

\subsection{Positive case - Proof of Theorem 3.3}

Recall that in this case, for $W\in\reals^{n\times n}_+$,
\BEQ\BA{rl}\label{eq:def_DAG_pos}
f(W) &= \mu\Tr(\idm+\alpha W)^n\\
h(W) &= \mu(n-1)(1+\alpha \normF{W})^n
\EA\EEQ
We will prove, 
\begin{theorem}[3.3 in main paper]
In the set $ \mathcal{M}_{\alpha} = \left\{W\in\reals_+^{n\times n}\text{ such that } \normF{W}\geq \frac{1}{(n-2)\alpha}\right\}$, $f$ is $1$-smooth relatively to $h$ \ie, 
\BEQ\BA{rl}
\left|\nabla^2f(W)[H, H]\right|\leq \nabla^2h(W)[H, H].\nonumber
\EA\EEQ
\end{theorem}
\clearpage
\begin{proof}
Note that both $f$ and $h$ are $C^\infty$ on $\mathcal{M}_{\alpha}$. Note that $f(W) = \mu\sum_{k=0}^n\binom{n}{k}\alpha^k\Tr(W^k)$, using this form and Taylor series, it is easier to get to:
\BEQ\BA{rl}\label{eq:hess_DAG}
\frac{1}{2}\nabla^2 f(W)[H,H] = \mu\sum_{k=2}^n\binom{n}{k}\alpha^k\Tr(X_kH),
\EA\EEQ
where $X_k = \sum_{j=0}^{k-2}(j+1)W^jHW^{k-2-j}$.

So:
\BEQ\label{eq:hess_DAG_ineq}
\BA{rl}
|\nabla^2 f(W)[H, H]| \leq \mu n(n-1)\alpha^2(1+\alpha ||W||)^{n-2}||H||^2 
\EA\EEQ
Moreover for $W\in \mathcal{M}_\alpha$, and $\Tilde{W} = \frac{W}{\normF{W}}$, we have:
\BEQ
\BA{ll}\label{eq:hess_kernel_pos}
\nabla^2 h(W)[H, H]&= \mu n(n-1)(1+\alpha \|W\|)^{n-2}\left[
 (n-1)\alpha^2(\Tr\Tilde{W}^TH)^2 
 + \alpha\frac{1 + \alpha\|W\|}{||W||}\left(||H||^2-(\Tr\Tilde{W}^TH)^2\right)\right]\\
 &= \mu n(n-1)(1+\alpha \|W\|)^{n-2}\Bigg[\underbrace{\left(
 (n-1)\alpha^2-\alpha\frac{1 + \alpha\|W\|}{||W||}\right)}_{\geq 0}(\Tr\Tilde{W}^TH)^2
 + \alpha\underbrace{\frac{1 + \alpha\|W\|}{||W||}}_{\geq \alpha}||H||^2\Bigg]
 \\
 &\geq \mu n(n-1)\alpha^2(1+\alpha \|W\|)^{n-2}||H||^2
\EA\EEQ

So we proved that $|\nabla^2 f(W)[H, H]|\leq \nabla^2 h(W)[H, H]$.
\end{proof}

Now, note that using the inequality $\|x\|_1 \leq \sqrt{n}\|x\|_2$ for $x\in \reals^n$, the set
\BEQ\BA{rl}
C_{\alpha} = \Big\{W\in\reals_+^{n\times n}\text{ such that } \normone{W}\geq \frac{n}{(n-2)\alpha}\Big\},\nonumber
\EA\EEQ
is a convex subset of $\mathcal{M}_{\alpha}$. 

\subsection{General case - Proof of Theorem 3.4}
\newcommand{\funcsum}{\mathrm{s}}

Let $f:\reals^{n\times n}\to\reals$ and $h:\reals^{n\times n}\to\reals$ be the functions  of the positive case (see Eq. \eqref{eq:def_DAG_pos}) and the function sum $\funcsum:A,B\to A+B$. 

In this case:
\BEQ\BA{rl}
\hat{f}(W^+, W^-) &= \mu\Tr\left(\idm+\alpha (W^++W^-)\right)^n = f\circ\funcsum(W^+, W^-)\\
\hat{h}(W^+, W^-) &= \mu(n - 1)\left(1+\alpha \normF{W^++W^-}\right)^n = h\circ\funcsum(W^+, W^-)
\EA\EEQ
We will prove the following lemma that allows to adapt the proof of the positive case to the general case. 

\begin{lemma}
For all $C^\infty$ functions $\hat{f}:\left(\reals_+^{n\times n}\right)^2\to \reals$ and $f:\reals^{n\times n}_+\to \reals$ such that $\hat{f} = f\circ \funcsum$, for all $W\in \left(\reals_+^{n\times n}\right)^2$ and $H \in \left(\reals^{n\times n}\right)^2$, we have:
\BEQ\BA{rl}
\la\nabla \hat{f}(W), H\ra &= \la \nabla f(\funcsum(W)), \funcsum(H)\ra\\
\nabla^2 \hat{f}(W)[H, H] &= \nabla^2 f(\funcsum(W))[\funcsum(H), \funcsum(H)]
\EA\EEQ
\end{lemma}
\begin{proof}
Composition with a linear operator.
\end{proof}

So in our case, for all $W \in \left(\reals_+^{n\times n}\right)^2$ such that $\normF{\funcsum(W)} \geq \frac{1}{(n-2)\alpha}$, and all $H \in \left(\reals^{n\times n}\right)^2$, we have:

\BEQ\BA{rl}
|\nabla^2 \hat{f}(W)[H, H]| = |\nabla^2 f(\funcsum(W))[\funcsum(H), \funcsum(H)]|
\leq \nabla^2 h(\funcsum(W))[\funcsum(H), \funcsum(H)]
 = \nabla^2 \hat{h}(W)[H, H]
\EA\EEQ
where $\hat{h}=h\circ\funcsum$. 
So $\hat{f}$ is $1$-smooth relatively to $\hat{h}$ on $\mathcal{M}^+_{\alpha}=\left\{W\in\left(\reals_+^{n\times n}\right)^2\,|\, s(W)\in\mathcal{M}_{\alpha}\right\}$, so on the convex subset $
C^+_{\alpha} = \Big\{W\in\left(\reals_+^{n\times n}\right)^2\,|\,s(W)\in C_\alpha\Big\}$.

\section{Experimental details \& further results}

\subsection{Synthetic datasets}
We found $\alpha = 0.1/n$ and $\mu=100$ to be working well and used it in all our synthetic data experiments. We used $\lambda\in [0, 10^{-6}, 10^{-4}]$ and averaged results for both NOTEARS and our algorithm, however the results are not very sensitive to the choice of $\ell_1$ penalty. We used $\tau=10^{-7}$ across experiments. 

Figures \ref{fig:shd_m50} and \ref{fig:shd_m200} show that both NOTEARS and our algorithm seem to perform similarly with varying noise type and graph architecture, however, CCDr seems to perform significantly better when graphs are sparser ($k=2$). 

A short Python demonstration is available as part of the supplementary material. 
 
\begin{figure}
    \centering
    \includegraphics[width=\linewidth]{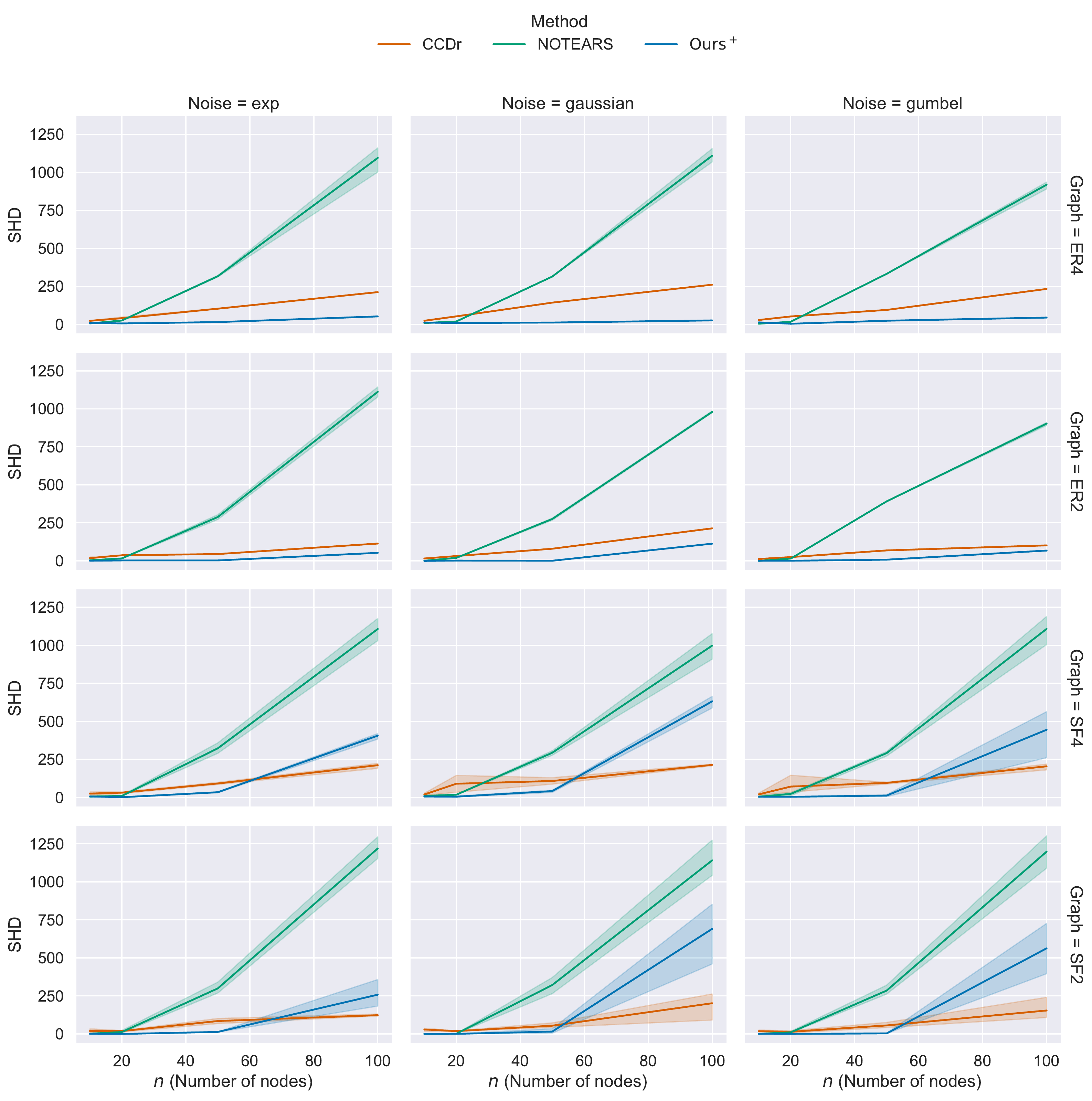}
    \caption{Structural Hamming Distance (SHD) on 12 different types of datasets (graph type, mean degree and noise type differ) for $m=50$}
    \label{fig:shd_m50}
\end{figure}

\begin{figure}
    \centering
    \includegraphics[width=\linewidth]{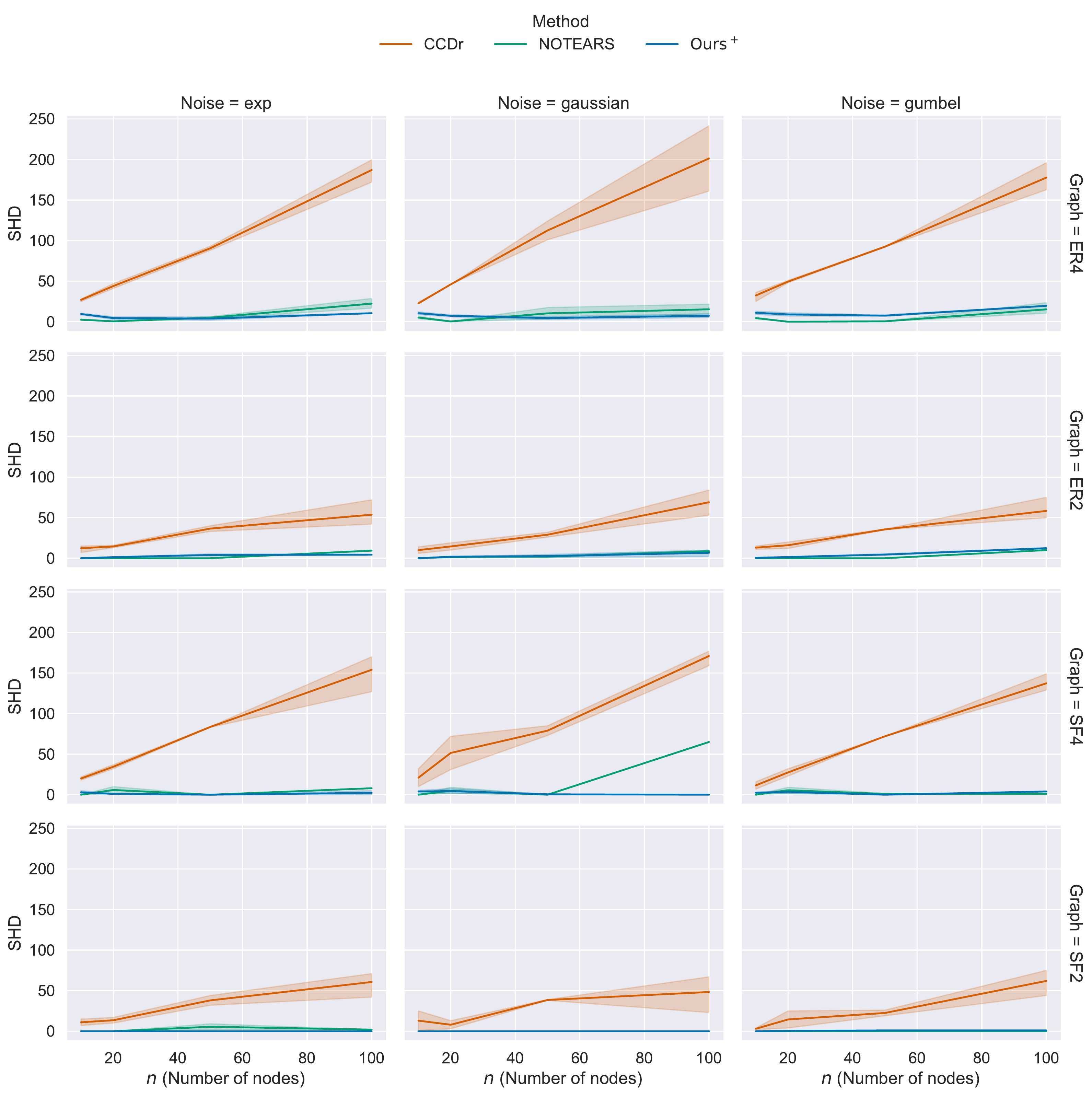}
    \caption{Structural Hamming Distance (SHD) on 12 different types of datasets (graph type, mean degree and noise type differ) for $m=200$}
    \label{fig:shd_m200}
\end{figure}


\bibliography{biblio.bib,MainPerso.bib}